\DeclareMathAlphabet\mathbfcal{OMS}{cmsy}{b}{n}
\newcommand{\idle}{\mathrm{i}}
\newcommand{\retx}{\mathrm{x}}
\newcommand{\new}{\mathrm{n}}
\DeclareMathOperator*{\argmin}{arg\,min} 
\newtheorem{theorem}{Theorem}
\newtheorem{problem}{Problem}
\newcommand{\Exp}[1]{\mathbb{E}\left[ #1 \right]} 
\newcommand{\Exppi}[1]{\mathbb{E}\left[ #1 \right]} 
\title{A Reinforcement Learning  Approach to \linebreak Age of Information in Multi-User Networks}
\author{
\IEEEauthorblockN{Elif Tu\u{g}\c{c}e Ceran, Deniz G{\"u}nd{\"u}z, and Andr\'as Gy\"orgy}
\IEEEauthorblockA{Department of Electrical and Electronic Engineering \\ Imperial College London\\
Email: \{e.ceran14, d.gunduz, a.gyorgy\}@imperial.ac.uk
}}
\begin{document}

\maketitle

\begin{abstract}
Scheduling the transmission of time-sensitive data to multiple users over error-prone communication channels is studied with the goal of minimizing the long-term average \textit{age of information} (AoI) at the users under a constraint on the average number of transmissions at the source node. After each transmission, the source receives an instantaneous ACK/NACK feedback from the intended receiver, and decides on what time and to which user to transmit the next update. The optimal scheduling policy is first studied under different feedback mechanisms when the channel statistics are known; in particular, the standard automatic repeat request (ARQ) and hybrid ARQ (HARQ) protocols are considered. Then a \textit{reinforcement learning} (RL) approach is introduced, which does not assume any a priori information on the random processes governing the channel states.  Different RL methods are verified and compared through numerical simulations.
\end{abstract}

\section{Introduction}

We consider a source node that communicates the most up-to-date status packets to multiple users (see Figure \ref{fig:system}). We are interested in the average \textit{age of information} (AoI) \cite{Altman2010, Kaul2011, Kaul2012} at the users, for a system in which the source node samples an underlying time-varying process and schedules the transmission of the sample values over imperfect links.  The AoI at each user at any point in time can simply be defined as the amount of time elapsed since the most recent status update at that user was generated. Most of the earlier work on AoI consider queue-based models, in which the status updates arrive at the source node randomly following a memoryless Poisson process, and are stored in a buffer before being transmitted to the destination \cite{Kaul2011, Kaul2012}. Instead, in the so-called \emph{generate-at-will} model \cite{Sun2016, Altman2010,Tan2015,Kadota2018,hsuage2017}, also considered in this paper,  the status updates of the underlying process of interest can be generated at any time by the source node. 

AoI in multi-user networks has been studied in \cite{He2017,hsuage2017,Kadota2018, Yates2017,Kaul_multiaccess,Yates_multicast}. It is shown in \cite{He2017} that the scheduling problem for the age minimization is NP-hard in general. Scheduling transmissions to multiple receivers is investigated in \cite{hsuage2017}, focusing on a perfect transmission medium, and the optimal scheduling algorithm is shown to be threshold-type. Average AoI has also been studied when status updates over unreliable multi-access channels \cite{Kaul_multiaccess} and  multi-cast networks \cite{Yates_multicast} are considered. A base station sending time-sensitive information to a number of users through unreliable channels is considered in \cite{Kadota2018}, where the problem is formulated as a multi-armed restless bandit. AoI in the presence of retransmissions has been considered in  \cite{Najm2017, Yates2017}. The status update system is modeled as an M/G/1/1 queue in \cite{Najm2017}, where the status update arrivals are assumed to be memoryless and random. Maximum distance separable (MDS) coding is considered in \cite{Najm2017}, and the successful decoding probabilities are derived in closed form.

\begin{figure}[!t]
\centering
\includegraphics[scale=0.3]{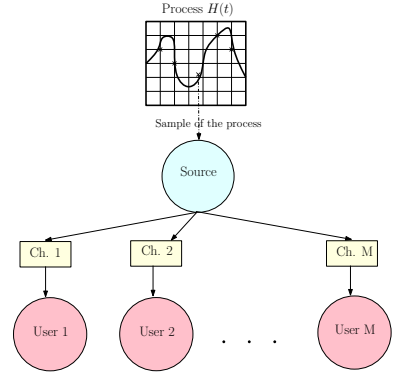}
\caption{The system model of a status update system over error prone links in a multi-user network.}
\label{fig:system}
\end{figure}

In this paper, we address the scheduling of status updates in a multi-user network for both the standard ARQ and HARQ protocols. Our goal is to minimize the expected average AoI under an average transmission-rate constraint. This constraint is motivated by the fact that sensors sending status updates have usually limited energy supplies (e.g., are powered via energy harvesting \cite{Gunduz2014}); hence, they cannot afford to send an unlimited number of updates, or increase the signal-to-noise-ratio in the transmission. First, we assume that the success probability before each transmission attempt is known; hence, the source can judiciously decide when to retransmit, and when to discard failed information and send a fresh update. Then, we consider scheduling status updates over  unknown channels, in which the success probabilities of transmission attempts are not known \textit{a priori}, and must be learned in an online fashion using the ACK/NACK feedback signals. 

In previous work \cite{wcnc_paper}, we have studied a point-to-point status update system in the presence of transmission errors and resource constraint. Here, the results obtained in \cite{wcnc_paper} are extended to the multi-user setting; in addition, more sophisticated reinforcement learning (RL) algorithms are proposed to minimize the average AoI and are demonstrated to perform very close to a lower bound.

The rest of the paper is organized as follows.  In Section~\ref{sec:system}, the system model is presented and the problem of minimizing the average AoI in multi-user networks under a resource constraint is formulated as a \emph{constrained Markov decision process} (CMDP). After determining the structure of the optimal policy, a primal-dual algorithm is proposed  to solve this CMDP in Section~\ref{sec:solution}. Minimization of the AoI for the standard ARQ protocol is investigated in Section~\ref{sec:arq}, and a lower bound on the average AoI is presented. Section~\ref{sec:learning} introduces RL algorithms to minimize the AoI in an unknown environment.  Simulation results are presented in Section~\ref{sec:results}, and the paper is concluded in Section~\ref{sec:conclusion}.

\section{System Model and Problem Formulation}
\label{sec:system}

We consider a slotted status update system where multiple users await time-sensitive information regarding a time-varying process. The source monitors the underlying time-varying process, for which it is able to generate a status update at the beginning of each time slot. The source can only transmit the status update to a single user at each time slot.  This can be either because of dedicated orthogonal links to the users, e.g., a wired network, or because the users are interested in distinct processes. A transmission attempt of a status update to a single user takes constant time, which is assumed to be equal to the duration of one time slot. 

 We assume that the channel state changes randomly from one time slot to the next in an independent and identically distributed fashion. We further assume the availability of an error- and delay-free single-bit ACK/NACK feedback from each user to the source node. 


Let $M$ denote the number of users and $j$ denote the index for each user $j \in \{1,\ldots,M \}$. The AoI for each user is defined as the time elapsed since the most up-to-date packet they received had been generated at the source. Assume that the most up-to-date packet at the destination at time $t$  has a time stamp of generation $U_j(t)$ for the $j^{th}$ user, then the AoI for user $j$ at the beginning of time slot $t$, denoted by $\delta_{j,t}\in \mathds{Z}^+$, is defined as $\delta_{j,t}\triangleq t-U_j(t). $
Therefore, $\delta_{j,t}$ increases by one when the source chooses not to transmit to user $j$ or a transmission fails, while it decreases to one (or, to the number of retransmissions in the case of HARQ) when a status update is successfully decoded.

In the classical ARQ protocol, a packet is retransmitted after each NACK feedback, until it is successfully decoded. However, in the AoI framework there is no point in retransmitting a failed out-of-date status packet if it has the same error probability with a fresh status update. Hence, the source always removes a failed status signal, and transmits a fresh status update. On the other hand, in the HARQ protocol, signals from all previous transmission attempts are combined for decoding; and therefore, the probability of error decreases with every retransmission \cite{harq2003}.

Let  $r_{j,t}\in \{0,\ldots,r_{max}\}$ denote the number of previous transmission attempts of the same packet. Then, the state of the system can be described by the vector $s_t \triangleq (\delta_{1,t}, r_{1,t}, \ldots, \delta_{M,t}, r_{M,t})$. At each time slot, the source node takes one of the several actions, denoted by $a \in \mathcal{A}$, where $\mathcal{A}=\{\idle,\new_1,\retx_1,\ldots, \new_M,\retx_M\}$ denotes the set of possible actions. It can i) remain idle ($a=\idle$); ii) generate and transmit a new status update packet to the $j^{th}$ user ($a=\new_j$); or, iii) retransmit the previously failed packet to the $j^{th}$ user ($a=\retx_j$).   Without loss of generality, each user in the network is assumed to have different priority levels represented by the weights $w_j \in \mathbbm{R}^+$ for user $j$. 


For the $j^{th}$ user, the probability of error after $r$ retransmissions, denoted by $g_j(r)$, depends on $r$,  the particular HARQ scheme used for combining multiple transmission attempts, and the  channel quality between the source and user $j$. An empirical method to estimate $g_j(r)$ is presented in \cite{harq2003}. As in any reasonable HARQ strategy, $g_j(r)$ is non-increasing in $r$, i.e., $g_j(r) \geq g_j(r')$ for all $r \leq r'$. To simplify the analysis and meet with practical constraints, we assume that there is a maximum number of retransmissions $r_{max}$.


Note that if  no resource constraint is imposed on the source, remaining idle is clearly a suboptimal action since it does not contribute to decreasing the AoI. However, continuous transmission is typically not possible in practice due to energy or interference constraints. To model these situations, we impose a constraint on the average number of transmissions, denoted by $\lambda \in (0,1]$.

This leads to the CMDP formulation, defined by the 5-tuple $\big(\mathcal{S}, \mathcal{A},  \mathcal{P}, c, d\big)$ \cite{Altman}: The countable set of states  $s\in \mathcal{S} $ and the finite set of actions $a \in \mathcal{A}$ have already been defined. $ \mathcal{P}$ refers to the transition kernel, where  $ \mathcal{P}_{s,s'}(a) = \Pr(s_{t+1}=s' \mid s_t = s, a_t=a)$ is the probability that action ${\displaystyle a}$  in state ${\displaystyle s}$  at time ${\displaystyle t}$  will lead to state ${\displaystyle s'}$ at time ${\displaystyle t+1}$, which will be explicitly defined in (\ref{eq:transitions}). The instantaneous cost function $c: \mathcal{S} \times \mathcal{A} \rightarrow \mathbbm{R}$, which models the weighted sum of AoI for multiple users, is defined as $c(s,a)=\Delta \triangleq (w_1\delta_{1}+\cdots+ w_M\delta_{M})$  for any $s\in \mathcal{S}$, independently of $a \in \mathcal{A}$. The instantaneous transmission cost related to the constraint, $d:\mathcal{S} \times \mathcal{A} \rightarrow \mathbbm{R}$, is independent of the state and depends only on the action $a$, where $d = 0$ if $a=\idle$, and $d=1$, otherwise. The  transition probabilities of the CMDP are given below where $\mathcal{P}_{s,s'}(a)$ is zero elsewhere.
\begin{align}
\mathcal{P}_{s,s'}(a)=
\begin{cases}
1  &\mathrm{ if } ~a=\idle, \delta'_i=\delta_i+1, \\ &r'_i=r_i, \forall i   \\
1-g_j(0)   &\mathrm{ if } ~a=\new_j, \delta'_j=1, r'_j=0, \\ &\delta'_i=\delta'_i+1, r'_i=r_i, \forall i \neq j \\
g_j(0)   &\mathrm{ if } ~a=\new_j, \delta'_j=\delta_j\!+\!1, r'_j\!=\!1 \\ &\delta'_i=\delta'_i+1, r'_i=r_i, \forall i \neq j \\
1-g_j(r_j)   &\mathrm{ if } ~a=\retx_j, \delta'_j=r_j, r'_j=0, \\ &\delta'_i=\delta'_i+1, r'_i=r_i, \forall i \neq j \\
g_j(r_j)   &\mathrm{ if } ~a=\retx_j, \delta'_j=\delta_j+1,\\ &r'_j=r'_j+1, \delta'_i=\delta'_i+1,\\ &r'_i=r_i, \forall i \neq j
\end{cases}
\label{eq:transitions}
\end{align}

A stationary \emph{policy} is a decision rule represented by $\pi: \mathcal{S} \times \mathcal{A} \rightarrow [0,1]$, which maps the state $s\in \mathcal{S}$ into action $a\in \mathcal{A}$ with some probability $\pi(a|s)$ and $\sum_a\pi(a|s)=1$. We will use $s_t^{\pi}=(\delta_{1,t}^{\pi},r_{1,t}^{\pi},\ldots,\delta_{M,t}^{\pi},r_{M,t}^{\pi})$ and $a_t^{\pi}$ to denote the sequences of states and actions, respectively, induced by policy $\pi$ with initial state $s_0$. Let $J^{\pi}(s_0)$ denote the infinite horizon average age, and $C^{\pi}(s_0)$ denote the expected average number of transmissions, when $\pi$ is employed with initial state $s_0$. 
We can state the CMDP optimization problem as follows:
\begin{problem}
\begin{subequations}
\begin{align}
&\underset{\pi\in \Pi}{\mathrm{ Minimize }}  ~J^{\pi}(s_0) \triangleq \limsup_{T\rightarrow \infty }\frac{1}{T}\Exppi{\sum_{t=1}^T{\Delta^{\pi}_t}\Big|s_0}, \label{eq:cost}\\
&\mathrm{ s.t. }  ~C^{\pi}(s_0) \triangleq \limsup_{T\rightarrow \infty }\frac{1}{T}\Exppi{\sum_{t=1}^T{\mathbbm{1}[a^{\pi}_t \neq \idle]  }\Big|s_0}\leq \lambda \label{eq:constraint},
\end{align} 
\end{subequations}
\label{problem}
\end{problem} where  $\Delta^{\pi}_t\triangleq\sum_{j=1}^M {w_j\delta_{j,t}^{\pi}}$. A policy $\pi^*\in \Pi$ is called optimal if $J^*\triangleq J^{\pi^*} \leq J^{\pi}$ for all $\pi\in \Pi$. For a deterministic policy, we will use $\pi(s)$ to denote the action taken with probability one in state $s$. Also, without loss of generality, we assume that the initial state at the beginning of the problem is $s_0=(1,0,2,0,\ldots,M-1,0,M,0)$; and $s_0$ will be omitted from the notation for simplicity.  We also assume throughout this paper that the Markov decision process (MDP) is \emph{unichain} \cite{Altman}, similarly to \cite{wcnc_paper}.

\section{Primal-Dual Algorithm to Minimize AoI}
\label{sec:solution}

In this section, we derive the solution for Problem~\ref{problem}, based on \cite{Altman}.  
While there exits a stationary and deterministic optimal policy for countable-state finite-action average-cost MDPs \cite{Puterman_book}, this is not necessarily true for CMDPs \cite{Altman}. 

To solve the constrained MDP, we start by rewriting Problem \ref{problem} in its Lagrangian form. The average Lagrangian cost of a policy $\pi$ with Lagrange multiplier $\eta \ge 0$, denoted by $J^{\pi}_{\eta}$, is defined as
\begin{align}
\lim_{T\rightarrow \infty }\frac{1}{T}\Exp{\sum_{t=1}^T{\Delta^{\pi}_t}}\!-\!\eta (C_{max}\!-\!\frac{1}{T}\Exp{\sum_{t=1}^T{\mathbbm{1}[a^{\pi}_t\neq \idle]}})
\end{align}
and, for any $\eta$, the optimal achievable cost $J^*_{\eta}$ is defined as $J_{\eta}^*\triangleq \min_{\pi}{J^{\pi}_{\eta}}$.
This formulation is equivalent to an unconstrained average-cost MDP, in which the instantaneous overall cost becomes  $\Delta_t+\eta\mathbbm{1}[a^{\pi}_t \neq \idle]$. It is well-known that there exits an optimal stationary deterministic policy for this problem.  In particular, there exists a function $h_{\eta}(s)$, called the differential cost function, satisfying the so-called \emph{Bellman optimality} equations:
\begin{equation}
\label{eq:Bellman}
h_{\eta}(s)+J^*_\eta=\min_{a\in\mathcal{A}}\big(\Delta+\eta \cdot \mathbbm{1}[a \neq \idle]+\Exp{h_{\eta}(s')}\big), 
\end{equation}
where $s'$ is the next state obtained from $s$ after taking action $a$. 
Then the optimal policy, for any $s \in \mathcal{S}$, is given by the action achieving the minimum in \eqref{eq:Bellman}:
\begin{equation}
\label{eq:opt_eta}
\pi_{\eta}^*(s) \in \argmin_{a\in\mathcal{A}} \big(\Delta+\eta \cdot \mathbbm{1}[a \neq \idle]+\Exp{h_{\eta}(s')}\big). 
\end{equation}
The relative value iteration (RVI) algorithm can be employed to solve \eqref{eq:Bellman} for any given $\eta$; and hence, to find the policy $\pi^*_\eta$ (more precisely, an arbitrarily close approximation) \cite{Puterman_book}. 

Similarly to Corollary 1 in \cite{wcnc_paper}, it is possible to characterize optimal policies for our CMDP problem using the deterministic policies $\pi_\eta^*$,: Specializing Theorem 4.4 of \cite{Altman}  to Problem~\ref{problem} (since it has a single global constraint), one can think of the optimal policy as a randomized policy between two deterministic policies: in any state $s=(\delta,r)$, the optimal policy in the CMDP problem chooses action $\pi^*_{\eta_1}(s)$ with probability $\mu$ and $\pi^*_{\eta_2}(s)$ with probability $1-\mu$ independently for each time slot where $\pi^*_{\eta_i}$ is the probability vector describing the deterministic choice of the optimal policy in the unconstrained MDP with Lagrange multiplier $\eta_i$. 





For any $\eta$, let $C_{\eta}$ denote the average resource consumption under the optimal policy $\pi_{\eta}^*$  (note that $C_\eta$ and $J^*_\eta$ can be computed directly through finding the stationary distribution of the chain, but can also be estimated empirically just by running the MDP with policy $\pi^*_\eta$). Obviously, $C_\eta$ and $J^*_\eta$ are monotone functions of $\eta$. Therefore,  given  $\eta_1$ and $\eta_2$, one can find a weight, denoted by $\mu$, by solving  $\mu C_{\eta_1} + (1-\mu) C_{\eta_2}=\lambda$, which has a solution  $\mu \in [0,1]$ if $C_{\eta_1} \ge \lambda \ge C_{\eta_2}$.

Next, we present a heuristic method to find $\eta_1$ and $\eta_2$: With the aim of finding a single $\eta$ value such that $C_\eta \approx \lambda$, starting with an initial parameter $\eta^0$, we run an iterative algorithm updating $\eta$ as $\eta^{m+1} = \eta^m+\alpha (C_{\eta^m}-\lambda)$ for some step size parameter $\alpha \triangleq 1/\sqrt{m}$. We continue this iteration until $|\eta^{m+1}-\eta^m|$ is smaller than a given  $\epsilon\in \mathbbm{R}^+$,  and denote the resulting value as $\eta^*$. Then, we approximate the values of $\eta_1$ and $\eta_2$ by $\eta^*\pm \xi$, where
$\xi$ is a small perturbation and the mixture policy can obtained as:
\begin{equation}
\label{eq:piopt}
\pi^*_{\lambda}=\mu \pi^*_{\eta_1}+ (1-\mu) \pi^*_{\eta_2}.
\end{equation}

\section{AoI with Classical ARQ Protocol}
\label{sec:arq}

Now, assume that the system adopts the classical ARQ protocol; that is, failed transmissions are discarded at the destination. In this case, there is no point in retransmitting a failed packet since the successful transmission probabilities are the same for a retransmission and the transmission of a new update. The state space reduces to  $(\delta_1, \delta_2, \ldots, \delta_M)$ as $r_{j,t}=0,~\forall j,t$, and the action space to $\mathcal{A}\in\{\idle , \new_1, \ldots, \new_M\}$. The probability of error of each status update is $p_j\triangleq g_j(0)$ for user $j$. State transitions in \eqref{eq:transitions}, Bellman optimality equations and the RVI algorithm can all be simplified accordingly.  Thanks to these simplifications, we are able to provide a closed-form  lower bound to the constrained MDP.

\subsection{Lower Bound on the AoI under Resource Constraint}

In this section, we derive a lower bound to the average AoI for the multi-user network with standard ARQ protocol.
\begin{theorem}
For a given network setup, we have $J_{LB}\leq J^{\pi}$, $\forall \pi \in \Pi$, where
\begin{align}
&J_{LB}=\frac{1}{2\lambda}{\left(\sum_{j=1}^M {\sqrt{\frac{w_j}{1-p_j}}}\right)}^2+\frac{\lambda w_{j^*}p_{j^*}}{2(1-p_{j^*})}+\frac{1}{2}\sum_{j=1}^M w_j, \\
&\textnormal{and }  j^*\triangleq \argmin_j{\frac{w_jp_j}{2(1-p_j)}}. \nonumber
\end{align}
\end{theorem}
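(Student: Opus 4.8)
The plan is to exploit the additive structure of the cost $\Delta=\sum_j w_j\delta_j$ and reduce the multi-user bound to a collection of single-user bounds coupled only through the transmission budget. For an arbitrary policy $\pi$, let $\rho_j$ denote the long-run fraction of slots in which a packet is sent to user $j$; since exactly one user is served per transmission, the constraint \eqref{eq:constraint} gives $\sum_{j=1}^M \rho_j \le \lambda$. Because under ARQ each attempt to user $j$ succeeds independently with probability $1-p_j$, the successful updates to user $j$ form a point process of rate $\mu_j=(1-p_j)\rho_j$, and the age $\delta_j$ depends on the policy only through this success process. Hence $J^{\pi}=\sum_j w_j\bar\delta_j$, and it suffices to (i) lower bound each $\bar\delta_j$ in terms of $\rho_j$, and (ii) minimize the resulting expression over all feasible rate vectors.

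For step (i) I would use a renewal–reward argument on the inter-success times of user $j$. Writing $D_j$ for a generic interval between two consecutive successful decodings, the age traces out $1,2,\dots,D_j$, so the time-average age equals $\bar\delta_j=\Exp{D_j^2}/(2\Exp{D_j})+1/2$. The number of attempts needed to obtain one success is geometric with parameter $1-p_j$, and since every attempt occupies at least one slot and idle slots are scheduled causally (without foreknowledge of the outcome), the inter-success time cannot be made less variable than this geometric count; this yields the variance floor $\Exp{D_j^2}\ge(\Exp{D_j})^2+p_j/(1-p_j)^2$. Substituting $\Exp{D_j}=1/\mu_j=1/[(1-p_j)\rho_j]$ gives the per-user bound
\begin{equation}
w_j\bar\delta_j \;\ge\; \frac{w_j}{2(1-p_j)\rho_j}+\frac{w_j p_j\rho_j}{2(1-p_j)}+\frac{w_j}{2}.
\end{equation}
I expect this variance floor to be the main obstacle: one must argue that no causal, feedback-adaptive schedule can suppress the intrinsic geometric fluctuation of the number of retransmissions (equivalently, that the minimum-variance schedule transmits in consecutive slots after a deterministic wait). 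The single-user special case established in \cite{wcnc_paper} can be invoked here.

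It remains to minimize $\sum_j w_j\bar\delta_j$ over $\rho_j\ge 0$ with $R\triangleq\sum_j\rho_j\le\lambda$. Abbreviating $a_j=w_j/[2(1-p_j)]$ and $b_j=w_j p_j/[2(1-p_j)]$, the summed bound reads $\sum_j a_j/\rho_j+\sum_j b_j\rho_j+\tfrac12\sum_j w_j$. For the first sum I would apply Cauchy--Schwarz, $\big(\sum_j a_j/\rho_j\big)\big(\sum_j\rho_j\big)\ge\big(\sum_j\sqrt{a_j}\big)^2$, and then use $R\le\lambda$ to get $\sum_j a_j/\rho_j\ge\tfrac1\lambda\big(\sum_j\sqrt{a_j}\big)^2$; since $\big(\sum_j\sqrt{a_j}\big)^2=\tfrac12\big(\sum_j\sqrt{w_j/(1-p_j)}\big)^2$, this produces the first term of $J_{LB}$. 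For the second sum I would use $\sum_j b_j\rho_j\ge(\min_j b_j)\,R=b_{j^*}R$ with $j^*=\argmin_j b_j$. Collecting terms gives $J^{\pi}\ge A/R+b_{j^*}R+\tfrac12\sum_j w_j$ with $A=\big(\sum_j\sqrt{a_j}\big)^2$, a one-dimensional convex function of $R$. Its unconstrained minimizer is $\sqrt{A/b_{j^*}}$, and since $A\ge a_{j^*}\ge a_{j^*}p_{j^*}\lambda^2=b_{j^*}\lambda^2$ (using $p_{j^*}\le 1$, $\lambda\le 1$), we have $\sqrt{A/b_{j^*}}\ge\lambda$, so the function is decreasing on $(0,\lambda]$ and is minimized at $R=\lambda$. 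Evaluating at $R=\lambda$ yields exactly $J_{LB}$, completing the argument.
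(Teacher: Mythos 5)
The paper itself contains no proof to compare against (it defers the proof to an extended version), so your proposal has to be judged on its own merits. Its overall skeleton is sound and the algebra is exact: the per-user bound $w_j\bar\delta_j \ge \frac{w_j}{2(1-p_j)\rho_j}+\frac{w_jp_j\rho_j}{2(1-p_j)}+\frac{w_j}{2}$, followed by Cauchy--Schwarz across users and the observation $A\ge b_{j^*}\lambda^2$ (which forces the optimal budget to $R=\lambda$), reproduces $J_{LB}$ term by term, including the middle term that makes this bound tighter than the one in \cite{Kadota2018}. Step (ii), the optimization over $\{\rho_j\}$, is fully rigorous as written.

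The genuine gap is in step (i), and you name it yourself but then defer it: the ``variance floor'' $\Exp{D_j^2}\ge(\Exp{D_j})^2+p_j/(1-p_j)^2$ is the crux of the entire theorem (without it one only recovers a Kadota-style bound missing the middle term), and it is asserted rather than proven; the appeal to \cite{wcnc_paper} does not obviously cover it, since that paper's point-to-point results contain no such renewal-variance lemma for feedback-adaptive schedules. The claim is in fact true, and can be closed as follows: condition on all randomness $\omega$ that is independent of user $j$'s channel (other users' channel realizations, the policy's internal randomization); within one inter-success cycle the only feedback from user $j$ is a string of NACKs, so given $\omega$ the wait before the $k$-th attempt is a fixed number $u_k(\omega)\ge 0$, i.e., $D_j=f_\omega(N)$ with $f_\omega(n)=\sum_{k=1}^n\bigl(1+u_k(\omega)\bigr)$ and $N$ geometric, independent of $\omega$. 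Since $f_\omega$ has increments at least $1$, an i.i.d.-copy argument gives
\begin{equation*}
\mathrm{Var}(D_j\mid\omega)=\tfrac12\,\Exp{\bigl(f_\omega(N)-f_\omega(N')\bigr)^2}\ \ge\ \tfrac12\,\Exp{(N-N')^2}=\mathrm{Var}(N)=\frac{p_j}{(1-p_j)^2},
\end{equation*}
and averaging over $\omega$ yields the floor. A second, related lacuna: for an arbitrary history-dependent policy the cycles are neither independent nor identically distributed, so the identity $\bar\delta_j=\Exp{D_j^2}/(2\Exp{D_j})+1/2$ and the floor above must be restated for \emph{empirical} first and second moments of the cycle lengths (with limsup/liminf care, or by first invoking the unichain CMDP theory the paper cites to restrict attention to stationary policies, under which the renewal treatment is legitimate). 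With those two repairs your argument is a complete and correct proof.
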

 
\begin{proof} 

The proof will be provided in the extended version of the paper.
\end{proof}
Previously, \cite{Kadota2018} proposed a universal lower bound on the average AoI for the broadcast channel with multiple users for the special case of $\lambda=1$. Differently from \cite{Kadota2018}, the lower bound derived in this paper shows the effect of constraint ($\lambda$) and even for $\lambda=1$, it is tighter than the lower bound provided in \cite{Kadota2018}.

\section{Learning to minimize AoI in an unknown environment}
\label{sec:learning}

In most practical scenarios, channel error probabilities for retransmissions may not be known at the time of deployment, or may change over time, where the source node does not have \textit{a priori} information about the decoding error probabilities and has to learn them over time. We employ online learning algorithms to learn the error probabilities over time without degrading the performance significantly. 


The Upper Confidence RL (UCRL2) \cite{UCRL2} is a well-known RL algorithm for generic MDP problems which has strong theoretical guarantees with regard to high probability regret bounds. However, the computational complexity of the algorithm scales quadratically with the size of the state space, which makes the algorithm unsuitable for large state spaces. UCRL2 has been initially proposed for generic MDPs with unknown rewards and transition probabilities: thus, they need to be learned for each state-action pair. On the other hand, for the average AoI problem, the number of parameters to be learned can be reduced to the number of transmission error probabilities to each user; thus, the computational complexity can be reduced significantly. In addition, the constrained structure of the average AoI problem requires additional modifications to the UCRL2 algorithm, which is achieved in this paper by updating the Lagrange multiplier according to the empirical resource consumption.  


\subsection{UCRL2 with standard ARQ}

In this section, we consider a multi-user network with standard ARQ where a source node transmits to multiple users with unknown and distinct error probabilities $p(j)\triangleq p_j$. UCRL2  exploits the optimistic MDP characterized by the optimistic estimation of error probabilities  within a certain confidence interval.  The details of the algorithm are given in Algorithm \ref{alg:ARQ}, where  $\widehat{p}(j)$ and $\tilde{p}(j)$ represent the empirical and the optimistic estimate of the error probability for user $j$. 

We propose several methods to find the optimal policy $\tilde{\pi}_k$ using the optimistic estimate $\tilde{p}(j)$ defined in steps 4 and 5 of  Algorithm  \ref{alg:ARQ}. In the generic UCRL2, extended value iteration (VI) is used for steps 4 and 5, which has high computational complexity for large networks. For the average AoI problem, the computational complexity can be reduced since the optimistic MDP can be found easily using the lower bound for the error probabilities and value iteration can be adopted to compute $\pi$ induced by $\tilde{p}(j)$ in step 5. The resulting algorithm will be called as \textit{UCRL2-VI}.

In order to further reduce the computational complexity, we can also adopt a suboptimal Whittle index  policy,  proposed in \cite{Kadota2018}, in  step 5 of the algorithm. The resulting algorithm is called as \textit{UCRL2-Whittle} in this paper and the policy $\pi_k$ in step 5 can be found as follows:  
\begin{itemize}
\item Compute the index for each user (similarly to \cite{Kadota2018}),
\begin{align}
I_j\triangleq w_j(1-\tilde{p}(j))\delta_j\left(\delta_j+\frac{1+\tilde{p}(j)}{1-\tilde{p}(j)}\right).
\end{align}
\item Compare the highest index with the Lagrange parameter $\eta$: if $\eta$ is smaller  then the source transmits to the user with the highest index, otherwise the source idles. 
\end{itemize}

\begin{algorithm}
\begin{footnotesize}
\caption{UCRL2 for the average AoI with standard ARQ.}
\begin{algorithmic}[1]
 \renewcommand{\algorithmicrequire}{\textbf{Input:}}
 \renewcommand{\algorithmicensure}{\textbf{Output:}}
 \REQUIRE A confidence parameter $\delta\in (0,1)$, an update parameter $\alpha$, $\lambda$, confidence bound $U$, $\mathcal{S}$, $\mathcal{A}$.
  \STATE $\eta=0$, $t=1$ and observe the initial state $s_1$.
  \FOR {episodes $k= 1,2,\ldots$ }
  \STATE Set $t_k\triangleq t$,\\ $N_k(j)\triangleq\#\{\tau<t_k:a_{\tau}=\new_j\}$,\\
  $E_k(j)\triangleq\#\{\tau<t_k:a_{\tau}=\new_j, failure\}$\\
  $\widehat{p}(j)\triangleq \frac{E_k(a)}{\max\{N_k(a),1\}}$,\\
  $C_k\triangleq\#\{\tau<t_k:a_{\tau}\neq \idle\}$,\\
  $\eta \leftarrow \eta+\alpha (C_k/t_k-\lambda)$. 
  \STATE Compute the optimistic error probabilities \\
  $\tilde{p}(j)\triangleq\max\{0,\widehat{p}(j)-\sqrt{\frac{U\log(S A t_k/\delta)}{max\{1,N_k(j)\}}}\}$
  \STATE Use $\tilde{p}(j)$  to find a policy $\tilde{\pi}_k$
  \STATE Execute policy $\tilde{\pi}_k$
  \WHILE{$v_k(j)<N_k(j)$}
 \STATE Choose an action $a_t=\tilde{\pi}_k(s_t)$,\\
 Obtain cost $\sum_{j=1}^M w_j\delta_j +\eta* \mathbbm{1}[a_t\neq \idle]$ and observe $s_{t+1}$\\
 Update $v_k(j)=v_k(j)+1$, \\
 Set $t=t+1$;
  \ENDWHILE
  \ENDFOR
 \end{algorithmic} 
  \label{alg:ARQ}
 \end{footnotesize}
\end{algorithm}

\subsection{UCRL2 with HARQ}

The pseudocode of the algorithm is given in Algorithm \ref{alg:UCRL_HARQ}, where $\widehat{g_j}(r)$ and $\tilde{g_j}(r)$ represent the empirical and the optimistic estimates of the error probability for user $j$, after $r$ retransmissions. 

\begin{algorithm}
\begin{footnotesize}
\caption{UCRL2 for the average AoI with HARQ.}
\begin{algorithmic}[1]
 \renewcommand{\algorithmicrequire}{\textbf{Input:}}
 \renewcommand{\algorithmicensure}{\textbf{Output:}}
 \REQUIRE A confidence parameter $\delta\in (0,1)$, an update parameter $\alpha$, $\lambda$, $\mathcal{S}$, $\mathcal{A}$.
  \STATE $\eta=0$, $t=1$ and observe the initial state $s_1$.
  \FOR {episodes $k= 1,2,\ldots$ }
  \STATE Set $t_k\triangleq t$,\\ $N_k(j,r)\triangleq\#\{t<t_k:a_t=\retx_j,r_{j,t}=r\}$,\\
  $N_k(j,0)\triangleq\#\{t<t_k:a_t=\new_j\}$,\\
  $E_k(j,r)\triangleq\#\{\tau<t_k:a_{\tau}=\retx_j, r_{j,t}=r, failure\}$\\
  $E_k(j,0)\triangleq\#\{\tau<t_k:a_{\tau}=\new_i,  failure\}$\\
  $\widehat{g_j}(r)\triangleq \frac{E_k(j,r)}{\max\{N_k(j,r),1\}}$,\\
  $C_k\triangleq\#\{\tau<t_k:a_{\tau}\neq \idle\}$,\\
  $\eta \leftarrow \eta+\alpha (C_k/t_k-\lambda)$. 
  \STATE Compute the optimistic error probabilities \\
  $\tilde{g_j}(r)\triangleq\max\{0,\widehat{g_j}(r)-\sqrt{\frac{U\log(S A t_k/\delta)}{max\{1,N_k(j,r)\}}}\}$
  \STATE Use $\tilde{g_j}(r)$ and value iteration to find a policy $\tilde{\pi}_k$
  \STATE Execute policy $\tilde{\pi}_k(s_t)$
  \WHILE{$v_k(j,r)<N_k(j,r)$}
 \STATE Choose an action $a_t=\tilde{\pi}_k(s_t)$,\\
 Obtain cost $\sum_{j=1}^M w_j\delta_j +\eta* \mathbbm{1}[a_t\neq \idle]$ and observe $s_{t+1}$\\
 Update $v_k(j,r)=v_k(j,r)+1$, \\
 Set $t=t+1$;
  \ENDWHILE
  \ENDFOR
 \end{algorithmic} 
  \label{alg:UCRL_HARQ}
 \end{footnotesize}
\end{algorithm}

\section{Numerical Results}
\label{sec:results}

First, we analyze the average AoI in a multi-user setting with standard ARQ protocols. The average AoI for a given resource constraint $\lambda$ is illustrated in Figure~\ref{fig:cons_vs_AoI} for a 3-user network with error probabilities given as $p=[0.5~ 0.2~ 0.1]$.  It can be seen from Figure~\ref{fig:cons_vs_AoI} that both UCRL2-VI and UCRL2-Whittle perform very close to lower bound particularly when $\lambda$ is low, i.e. the system is more constrained. Although UCRL2-Whittle algorithm has a significantly lower computational complexity, it performs  very similar to UCRL-Whittle for all $\lambda$ values. 

\begin{figure}[!t]
\centering
\includegraphics[scale=0.43]{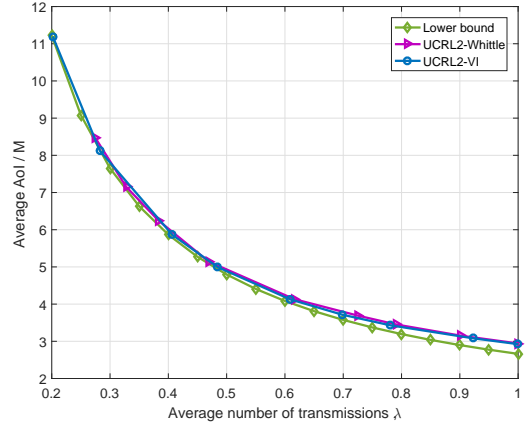}
\caption{Average AoI with respect to $\lambda$ for a 3-user network with $M=3$ and error probabilities $p=[0.5~ 0.2~ 0.1]$, $w_j=1,~\forall j$. Time horizon is set to $T=10^5$, and the results are averaged over $100$ runs.}
\label{fig:cons_vs_AoI}
\end{figure}

Figure~\ref{fig:arq_size} illustrates the average AoI with standard ARQ with respect to the size of a network when there is no constraint on the average number of transmissions (i.e. $\lambda=1$) and the performance of the UCRL2 algorithm is compared with the lower-bound since the computational cost of value/policy iteration algorithms is very high. Learning algorithm performs close lower-bound and very close to the Whittle index policy \cite{Kadota2018} which assumes the a priori knowledge of error probabilities.  Moreover, the UCRL2 algorithm outperforms the greedy benchmark policy which always transmits to the user with the highest age and Round Robin policy which transmits to each user in turns. 

\begin{figure}[!t]
\centering
\includegraphics[scale=0.44]{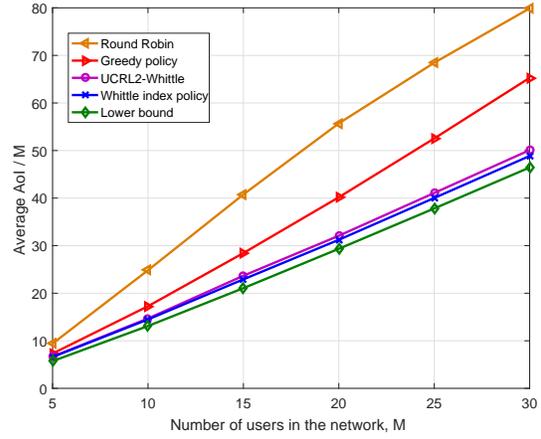}
\caption{Average AoI for networks with different sizes where $p_j=j/M$, $\lambda=1$ and  $w_j=1,~\forall j$.The simulation results are averaged over 100 runs.}
\label{fig:arq_size}
\end{figure}

The performance of UCRL2-Whittle and average cost SARSA are shown in Figure~\ref{fig:learn_ARQ}. UCRL2-Whittle converges much faster compared to the standard Average-cost SARSA algorithm, and it performs very close to the optimal algorithm computed by value iteration (VI) with known error probabilities. Figure~\ref{fig:learn_HARQ} shows the performance of learning algorithms for HARQ protocol for a 2-user scenario. It is worth noting that although UCRL2-VI converges to the optimal policy in fewer iterations than average-cost SARSA, iterations in  UCRL2-VI is computationally more demanding since it uses value iteration in each $k$. Therefore, UCRL2-VI  is not practical for problems with large state spaces, in our case for large networks.


\begin{figure}[!t]
\centering
\includegraphics[scale=0.45]{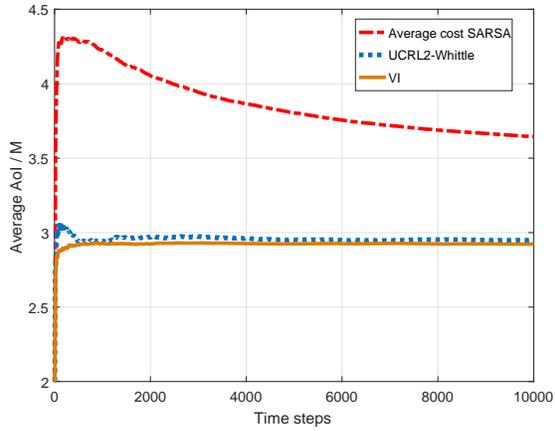}
\caption{Average AoI for networks for a 3-user ARQ network with $M=3$ and error probabilities $p=[0.5~ 0.2~ 0.1]$ where $\lambda=1$ and $w_j=1,~\forall j$. The simulation results are averaged over 100 runs.}
\label{fig:learn_ARQ}
\end{figure}

\begin{figure}[!t]
\centering
\includegraphics[scale=0.45]{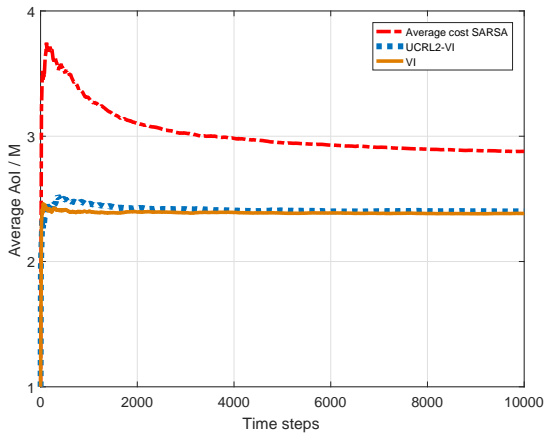}
\caption{Average AoI for networks for a 2-user HARQ network with $M=2$ and error probabilities $g_1(r_1)=0.5 \cdot 2^{r_1}$ and $g_2(r_2)=0.5 \cdot 2^{r_2}$ where $\lambda=1$ and $w_j=1,~\forall j$. The simulation results are averaged over 100 runs.}
\label{fig:learn_HARQ}
\end{figure}

\section{Conclusion}
\label{sec:conclusion}

Scheduling the transmission of status updates to multiple destination nodes has been considered with the average AoI as the performance measure. Under a resource constraint, the problem is modeled as a CMDP considering both the classical ARQ and the HARQ protocols and an online scheduling policy has been proposed. A lower bound on the average AoI has been shown for the standard ARQ protocol. RL algorithms are presented for scenarios when the error probabilities may not be known in advance, and demonstrated to perform very close optimal for scenarios investigated in numerical simulations. The algorithms adopted in this paper are also relevant to different multi-user systems concerning the timeliness of information, and the proposed methodology can be used in other CMDP problems.

\bibliography{ageofmultieski}
\end{document}